\documentclass[12pt]{amsart}
\usepackage{amsmath,amscd,amssymb}
\usepackage{amsfonts}
\usepackage{mathtools}
\numberwithin{equation}{subsection}
\theoremstyle{plain}

\newtheorem{thm}[subsection]{Theorem}

\newtheorem{lemma}[subsection]{Lemma}
\newtheorem{cor}[subsection]{Corollary}

\theoremstyle{definition}
\newtheorem{defn}[subsection]{Definition}

\newtheorem{notn}[subsection]{Notation}
\newtheorem{cont}[subsection]{Contents}
\newtheorem{ackn}[subsection]{Acknowledgement}
\theoremstyle{remark}
\newtheorem{rem}[subsection]{Remark}

\title{Proofs of strong forms of the Bovdi--Geche universality conjecture and a complex analogue}
\author[Bovdi-P\'al]{Victor Bovdi and Ambrus P\'al}
\date{October 19, 2026.}
\address{Department of Mathematical Sciences, UAEU,  UAE} \email{vbovdi@gmail.com} 
\address{Mathematical Institute, E\"otv\"os Lor\'and University
1117 Budapest, P\'azm\'any P\'eter s\'et\'any 1/C, Hungary; \newline \hspace*{4mm} 
Department of Mathematical Sciences, UAEU,  UAE}
\email{ambrus.pal@uaeu.ac.ae}
\begin{document}
\begin{abstract}
We prove two universality theorems for neural networks consisting of a single neuron.  First we prove a strong form of the Bovdi--Geche universality conjecture on neurons over finite fields, where the activation function is a certain modification of the discrete logarithm, using the Weil bound on multiplicative character sums over finite fields. Then we show a generalisation of Aizenberg's universality theorem for complex binary neurons with a weight vector independent of the target function, positively answering another question in the literature. The proof of the latter uses an equidistribution theorem for the dilates of certain maps into tori.
\end{abstract}
\footnotetext[1]{\it 2020 Mathematics Subject Classification. \rm 11T23,
11T24, 68T07.}
\maketitle
\pagestyle{myheadings}
\markboth{Victor Bovdi and Ambrus P\'al}{Proofs of the Bovdi--Geche conjecture and a complex analogue}
\section{Introduction}

Neural networks consisting of a single neuron have been much studied in the literature, including a class of functions which are usually called threshold functions, both over finite fields (for example see \cite{Thelma}, \cite{Geche1998}, \cite[Chapter~4]{Geche2010}, and \cite{GecheEtAl2020}), and over the complex field (for example see \cite{AIP1971Doklady}, \cite{AIPK1971}, \cite{AIPK1973}, \cite{AizenbergIvaskiv1977}, \cite{Aizenberg2009} and \cite{Aizenberg2010ISMVL}). The universality of such functions on finite subsets is of fundamental importance, and much effort was exerted on this circle of problems. In this paper we prove two universality theorems for threshold functions. In order to state them, we will need to review some basic definitions. 
\begin{defn} Let $K$ be a field, let $Y$ be a set, and let
$\Phi:K\dashrightarrow Y$ be a partially defined map, which we will call the activation function, with domain of definition $\mathcal D\subset K$. As usual for every $n\in\mathbb N$ let $\langle\cdot,\cdot\rangle:K^n\times K^n\to K$ denote the usual scalar product:
$$\langle\underline x,\underline y\rangle=\sum_{i=1}^n x_i y_i\textrm{, where }
\underline x=(x_1,\ldots,x_n),\underline y=(y_1,\ldots,y_n)\in K^n.$$
For every $\underline w\in K^n$ and $T\in K$ the threshold function attached to $(\underline w,T)$ with respect to $\Phi$ is the partially defined function:
$$\theta:K^n\dashrightarrow Y$$
which maps every $\underline x\in K^n$ such that $\langle\underline x,\underline w\rangle+T\in\mathcal D$ to the value $\theta(\underline x)=
\Phi(\langle\underline x,\underline w\rangle+T)$.
\end{defn}
\begin{defn} A pair $(\underline w,T)\in K^n\times K$ is admissible with respect to a subset $X\subseteq K^n$ if the associated threshold function $\theta$ is defined on every element of $X$. The threshold function $\theta$ attached to $(\underline w,T)\in K^n\times K$ realises a function $f:X\to Y$ if the pair $(\underline w,T)$ is admissible with respect to $X$ and the restriction of $\theta$ to $X$ is $f$. In this case we will also sometimes say that $(\underline w,T)$ realises $f$ for the sake of simplicity. The pair $(K,\Phi)$ is universal for $X$ if every map $f:X\to Y$ is realised by some threshold function.
\end{defn}
\begin{defn} Fix integers
\begin{equation}\label{eq:para}
n\geq1,\qquad 2\leq k_1\leq\cdots\leq k_n,\qquad q\geq2,\end{equation}
set $N=k_1\cdots k_n$, and let $L=\operatorname{lcm}(k_1,\ldots,k_n,q)$.  A finite field $\mathbb F$ of cardinality $|\mathbb F|=Q$ is called admissible for the parameters in \eqref{eq:para} if
$$L\mid Q-1\quad\text{and}\quad Q-1>\max\{k_1,\ldots,k_n,q\}.$$
By Dirichlet's theorem on primes in arithmetic progressions there are admissible finite fields of arbitrarily large cardinality. 
\end{defn}
\begin{defn}\label{def1.4} For every divisor $d$ of $Q-1$ let $C_d$ denote the unique subgroup of $\mathbb F^*$ of order $d$.  Set
$$X=C_{k_1}\times\cdots\times C_{k_n}\subset(\mathbb F^*)^n\subset
\mathbb F^n, \qquad Y=C_q\subset\mathbb F^*.$$
Fix a primitive element $\varepsilon\in\mathbb F^*$, set $h=\frac{Q-1}{q}$ and $\sigma=\varepsilon^{h}$. The activation function attached to $(\mathbb F,\varepsilon)$ is the map
$$\Phi_{\varepsilon}:\mathbb F^*\longrightarrow Y$$
given by the rule: 
\begin{equation}\label{eq:activation}
 \Phi_{\varepsilon}(\varepsilon^a)=\sigma^{\lfloor a/h\rfloor},
 \qquad 0\leq a\leq Q-2.
\end{equation}
For every function $f:X\to Y$ let $R_f(\mathbb F,\varepsilon)$ be the number of pairs $(\underline w,T)\in \mathbb F^n\times \mathbb F$ admissible with respect to $X$ such that the threshold function attached to $(\underline w,T)$ with respect to $\Phi_{\varepsilon}$ realises $f$.
\end{defn}
\begin{thm}\label{thm:counting} There exists a constant $C_N>0$, depending only on $N$, such that for every admissible finite field $\mathbb F$ of cardinality $Q$, every map $f:X\to Y$, and every primitive element
$\varepsilon\in\mathbb F^*$ we have
\begin{equation}\label{eq:counting-asymptotic}
 \left|R_f(\mathbb F,\varepsilon)-\frac{Q^{n+1}}{q^N}\right|
 \leq C_N Q^{n+1/2}(1+\log Q)^N.
\end{equation}
\end{thm}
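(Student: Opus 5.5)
We count pairs $(\underline w,T)$ by writing the condition ``$\Phi_\varepsilon(\langle\underline x,\underline w\rangle+T)=f(\underline x)$ for all $\underline x\in X$'' as a product of indicator functions. Each indicator is then expanded in multiplicative characters of $\mathbb F^*$. For $y=\sigma^j\in Y$ the fibre $\Phi_\varepsilon^{-1}(y)=\{\varepsilon^a: jh\le a<(j+1)h\}$ is an ``interval'' $I_j$ in the discrete logarithm. Via the isomorphism $\mathbb F^*\cong\mathbb Z/(Q-1)$, $\varepsilon^a\mapsto a$, the indicator $\mathbf 1_{I_j}$ has the Fourier expansion
$$\mathbf 1_{I_j}(z)=\sum_{\chi}c_{j}(\chi)\chi(z),\qquad c_j(\chi_0)=\frac1q,\qquad \sum_{\chi}|c_j(\chi)|\le C(1+\log Q).$$
This is the standard bound for Fourier coefficients of an interval in a cyclic group: $|c_j(\chi)|\ll \min(1/q,\ 1/\|m/(Q-1)\|(Q-1))$, where $\chi=\chi_1^m$ for a generator $\chi_1$ of the character group. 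Each $c_j$ is extended by $\mathbf 1_{I_j}(0)=0$, using $\chi(0)=0$. Then
$$R_f=\sum_{(\underline w,T)\in\mathbb F^{n+1}}\ \prod_{\underline x\in X}\mathbf 1_{I_{j(\underline x)}}\bigl(\langle\underline x,\underline w\rangle+T\bigr),$$
where $f(\underline x)=\sigma^{j(\underline x)}$. Admissibility is automatic, since each factor vanishes at $0$.

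Expanding the product over the $N$ points of $X$ gives
$$R_f=\sum_{(\chi_{\underline x})_{\underline x\in X}}\Bigl(\prod_{\underline x}c_{j(\underline x)}(\chi_{\underline x})\Bigr)\,S\bigl((\chi_{\underline x})\bigr),\qquad S=\sum_{(\underline w,T)}\prod_{\underline x\in X}\chi_{\underline x}\bigl(\langle \underline x,\underline w\rangle+T\bigr).$$
The all-trivial term contributes $q^{-N}$ times the number of $(\underline w,T)$ with $\langle\underline x,\underline w\rangle+T\neq0$ for all $\underline x\in X$. That count is $Q^{n+1}-O(NQ^n)$, because each condition removes one hyperplane. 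This yields the main term $Q^{n+1}/q^N$ with admissible error.

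For every other tuple we will show $|S|\le (N-1)Q^{n+1/2}$. Granting this, the total error is at most $(N-1)Q^{n+1/2}\prod_{\underline x}\sum_\chi|c_{j(\underline x)}(\chi)|\le C_NQ^{n+1/2}(1+\log Q)^N$, as required.

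The main step is the bound on $S$, and we plan to get it by reducing to a one-variable Weil bound. The points of $X$ are pairwise distinct, so the affine linear forms $\ell_{\underline x}(\underline w,T)=\langle\underline x,\underline w\rangle+T$ are pairwise non-proportional. They are distinct, and none is a scalar multiple of another, because each has $T$-coefficient $1$. Fix $\underline x_0$ with $\chi_{\underline x_0}$ nontrivial. Choose a direction $\underline v\in\mathbb F^{n+1}$ such that the values $\ell_{\underline x}(\underline v)$ (linear parts) are nonzero and pairwise distinct after normalisation. More simply, slice $\mathbb F^{n+1}$ into lines $\{p+t\underline v\}$. On each line, $\prod\chi_{\underline x}(\ell_{\underline x}(p)+t\,\ell^{\mathrm{lin}}_{\underline x}(\underline v))$ is $\prod\chi_{\underline x}(a_{\underline x})\chi_{\underline x}(t-\beta_{\underline x}(p))$, with roots $\beta_{\underline x}(p)=-\ell_{\underline x}(p)/\ell_{\underline x}^{\mathrm{lin}}(\underline v)$. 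Whenever the root $\beta_{\underline x_0}(p)$ differs from all other roots, the polynomial is not a $d$-th power, where $d$ is the order of the combined character. The Weil bound then gives $|\sum_t|\le (N-1)\sqrt Q$. The exceptional lines, where $\beta_{\underline x_0}(p)=\beta_{\underline x}(p)$ for some $\underline x\neq\underline x_0$, correspond to $p$ lying in a hyperplane modulo $\underline v$, because the forms are non-proportional. There are $O(NQ^{n-1})$ such lines, each contributing at most $Q$, for a total of $O(NQ^n)$, which is absorbed.

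The obstacle is to choose $\underline v$ so that every $\ell^{\mathrm{lin}}_{\underline x}(\underline v)\ne0$ and the exceptional set really is a proper subvariety. A generic $\underline v$ works once $Q>N$, since finitely many hyperplanes must be avoided. If $Q\le N$, the bound is trivial after enlarging $C_N$.
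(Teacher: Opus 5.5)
Your proposal is correct and is essentially the paper's proof. It uses the same character expansion of the level-set indicators with the $\ell^1$ bound $C(1+\log Q)$, the same main term from the all-principal tuple, and the same reduction of the remaining tuples to a one-variable Weil bound along lines, with the exceptional lines bounded trivially by $O(Q^n)$.

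The paper takes your direction $\underline v$ to be the $T$-direction (fix $\underline w$, sum over $T$). Since every $\ell_{\underline x}$ has $T$-coefficient $1$, this removes your ``obstacle'' and the separate case $Q\le N$. Also, your bound on $S$ should read $(N-1)Q^{n+1/2}+O(NQ^n)$, because of the exceptional lines and the at most $N$ points per line where trivial characters vanish. This correction is harmless.
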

Since in the estimate above the main term has order $Q^{n+1}$ and the error is bounded by a constant multiple of $Q^{n+1/2}(1+\log Q)^N$, we immediately get the following 
\begin{cor} Let $\mathbb F$ be an admissible finite field of cardinality $Q$, and let $\varepsilon\in\mathbb F^*$ be a primitive element. If $Q$ is sufficiently large (for fixed $N$ and $q$) then the pair $(\mathbb F,\Phi_{\varepsilon})$ is universal for $X$.\qed 
\end{cor}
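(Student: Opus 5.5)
We write the realisation condition through indicator functions of the ``arcs'' $A_j=\{\varepsilon^a: jh\le a<(j+1)h\}$. We have $\Phi_\varepsilon(z)=\sigma^j$ exactly when $z\in A_j$, and each $A_j$ has size $h=(Q-1)/q$. Write $f(\underline x)=\sigma^{j(\underline x)}$. Then
$$R_f(\mathbb F,\varepsilon)=\sum_{(\underline w,T)\in\mathbb F^{n+1}}\ \prod_{\underline x\in X}\mathbf 1_{A_{j(\underline x)}}\bigl(\langle\underline x,\underline w\rangle+T\bigr).$$
Admissibility is automatic here, since $\mathbf 1_{A_j}(0)=0$. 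Expand each indicator in multiplicative characters of $\mathbb F^*$, extended by $\chi(0)=0$:
$$\mathbf 1_{A_j}(z)=\frac1{Q-1}\sum_{\chi}\widehat{\mathbf 1}_{A_j}(\chi)\,\chi(z),\qquad \widehat{\mathbf 1}_{A_j}(\chi)=\sum_{a=jh}^{(j+1)h-1}\overline\chi(\varepsilon^a).$$
The trivial character has coefficient $h$. For $\chi=\chi_0^m$ with $\chi_0(\varepsilon)=e^{2\pi i/(Q-1)}$ and $m\ne0$, the coefficient is a geometric sum and satisfies $|\widehat{\mathbf 1}_{A_j}(\chi)|\le \min\bigl(h,\,1/|\sin(\pi m/(Q-1))|\bigr)$. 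Summing over $m$ gives the standard bound
$$\sum_\chi|\widehat{\mathbf 1}_{A_j}(\chi)|\ll Q(1+\log Q).$$

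Substituting and multiplying out over the $N$ points of $X$, we get
$$R_f=(Q-1)^{-N}\sum_{(\chi_{\underline x})_{\underline x\in X}}\ \prod_{\underline x}\widehat{\mathbf 1}_{A_{j(\underline x)}}(\chi_{\underline x})\;S\bigl((\chi_{\underline x})\bigr),\qquad S=\sum_{\underline w,T}\prod_{\underline x}\chi_{\underline x}\bigl(\langle\underline x,\underline w\rangle+T\bigr).$$

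\emph{Main term.} When all characters are trivial, $S$ counts the $(\underline w,T)$ for which no linear form $\ell_{\underline x}=\langle\underline x,\underline w\rangle+T$ vanishes. This is $Q^{n+1}-O_N(Q^n)$, since there are $N$ distinct nonzero linear forms in $(\underline w,T)$ (the first coordinate of the point $(1,\underline x)$ is $1$, so these forms are pairwise non-proportional). The contribution is therefore $h^NQ^{n+1}/(Q-1)^N+O(Q^n)=Q^{n+1}/q^N+O_N(Q^n)$.

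\emph{Error terms.} The key estimate is that whenever not all $\chi_{\underline x}$ are trivial,
$$|S|\le C'_N\,Q^{n+1/2}.$$
To prove it, fix the nontrivial $\chi_{\underline x_0}$ and choose a direction along which to fibre. The forms $\ell_{\underline x}$ are pairwise non-proportional, so for a generic line direction $\underline v\in\mathbb F^{n+1}$ we have $\ell_{\underline x}(\underline v)\neq0$ for all $\underline x$. Such a direction exists because the union of $N$ hyperplanes does not cover $\mathbb F^{n+1}$ once $Q>N$. Decompose $\mathbb F^{n+1}$ into lines $\underline u+t\underline v$. On each line the summand is $\prod_{\underline x}\chi_{\underline x}(a_{\underline x}t+b_{\underline x})$ with all $a_{\underline x}\neq0$, that is, $\prod_{\underline x}\chi_{\underline x}\bigl(t-c_{\underline x}(\underline u)\bigr)$ up to a constant of modulus $1$.

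By the Weil bound, such a one-variable sum has absolute value at most $(N-1)\sqrt Q$, provided the roots $c_{\underline x}(\underline u)$ at which $\chi_{\underline x}$ is nontrivial are not all merged into coincident roots with trivial combined character. Two roots coincide, $c_{\underline x}=c_{\underline y}$, only on a proper linear subspace of the $\underline u$'s, namely the hyperplane cut out by a nonzero combination of the two non-proportional forms. So the bad lines number $O_N(Q^{n-1})$ out of $Q^n$, and each contributes trivially at most $Q$. In total $|S|\le Q^n\cdot N\sqrt Q+O_N(Q^n)$.

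Summing the error terms, we obtain
$$\Bigl|R_f-\frac{Q^{n+1}}{q^N}\Bigr|\le O_N(Q^n)+(Q-1)^{-N}\,C'_NQ^{n+1/2}\prod_{\underline x}\Bigl(\sum_\chi|\widehat{\mathbf 1}_{A_{j(\underline x)}}(\chi)|\Bigr)\le C_NQ^{n+1/2}(1+\log Q)^N,$$
using the $\ell^1$ bound $\sum_\chi|\widehat{\mathbf 1}_{A_j}(\chi)|\ll Q(1+\log Q)$ for each of the $N$ factors. The step I expect to be the main obstacle is the uniform character-sum bound for $S$. One must check carefully that the generic fibration really preserves non-degeneracy, with the product character nontrivial at some root after collisions, off a lower-dimensional exceptional set. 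One must also make sure the constants depend only on $N$ (and not on $q$ or the $k_i$ beyond $N$).
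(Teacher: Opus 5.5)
Your proposal is correct and is essentially the paper's route: you re-derive Theorem~\ref{thm:counting} from the character expansion of the block indicators, the $\ell^1$ bound on their coefficients, and the Weil bound on lines. The only variation is that you fibre along a generic direction $\underline v$, whereas the paper (Lemma~\ref{lem:affine-character-sum}) fibres along the $T$-direction $\underline v=(\underline 0,1)$; since $\ell_{\underline x}(\underline v)=1\neq0$ there, you do not need $Q>N$ to find $\underline v$.

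Do state the final step explicitly. Once $Q^{1/2}>C_Nq^N(1+\log Q)^N$, the main term exceeds the error, so $R_f(\mathbb F,\varepsilon)>0$ for every $f:X\to Y$, which is exactly universality.
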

This claim is stronger than the conjecture formulated by Bovdi--Geche on page 125 of \cite{Geche2010}, which only postulated the existence of 
an admissible field $\mathbb F$ such that $(\mathbb F,\Phi_{\varepsilon})$ is universal for some choice of a primitive element $\varepsilon\in\mathbb F^*$. 

The proof of Theorem \ref{thm:counting} is a rather standard application of the Weil bound for multiplicative character sums; for similar arguments see for example \cite{DavenportErdos}, \cite{Peralta} and \cite{MSW}. As usual it proceeds by expressing the characteristic function of the set of pairs
$(\underline w,T)$ whose threshold function realises $f$ in terms of multiplicative characters. The way this proof mainly differs from the arguments in the aforementioned papers is that while they work with the partition of the multiplicative group of the field into the cosets of a subgroup of fixed index, we work with the level sets of the activation function
$\Phi_{\varepsilon}$ which are not cosets. However this difference is taken care of by the elementary Lemma~\ref{lem:l1-bound}. 
\begin{defn}\label{angular} Now fix an integer $q\geq2$, and set $K=\mathbb C$ and $Y=\mathbb Z/q\mathbb Z$. As usual let $\{\cdot\}:\mathbb R\to [0,1)$ denote the fractional-part function. For every integer
$\ell\geq1$ let $\Phi_{q,\ell}:\mathbb C\dashrightarrow Y$ denote the partially defined function given by the rule:
$$\Phi_{q,\ell}(z)=j\ \textrm{mod}\ q\quad\textrm{if and only if}\quad 
 \big\{\frac{\ell\cdot\arg(z)}{2\pi}\big\}\in \left(\frac{j}{q},\frac{j+1}{q}\right)$$
for every $j\in\{0,1,\ldots,q-1\}$, with domain of definition $\mathbb C^*\setminus\mathcal R_{q,\ell}$, where 
$$\mathcal R_{q,\ell}=
 \big\{z\in\mathbb C^*\mid \big\{\frac{\ell\cdot\arg(z)}{2\pi}\big\}=\frac{j}{q}
 \text{ for some }0\leq j<q\big\},$$
and $\arg:\mathbb C\dashrightarrow\mathbb R$ is a branch of the argument function on $\mathbb C^*\setminus\mathcal R_{q,\ell}$. 
\end{defn}

\begin{thm}\label{thm:strong-periodic-universality} Let $X\subset\mathbb C^n$ be nonempty and finite. Then there exist a $\underline w\in\mathbb C^n$ and an integer $\ell_0\geq1$ such that for every integer $\ell\geq\ell_0$ and every map $f:X\to Y$ there is a $b_{f,\ell}\in\mathbb C$ such that $(\underline w,b_{f,\ell})$ realises $f$ with respect to $\Phi_{q,\ell}$. 
\end{thm}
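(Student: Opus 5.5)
The plan is to reduce the statement to an equidistribution statement for a map from a disc into a torus. Write $X=\{\underline x_1,\ldots,\underline x_m\}$ with the $\underline x_j$ pairwise distinct. First choose $\underline w\in\mathbb C^n$ outside the finitely many hyperplanes $\{\underline w\mid\langle\underline x_i-\underline x_j,\underline w\rangle=0\}$, $i\neq j$. Then the numbers $z_j=\langle\underline x_j,\underline w\rangle$ are pairwise distinct. This is the only constraint on $\underline w$, so $\underline w$ does not depend on $f$ or $\ell$.

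Next fix an open disc $D\subset\mathbb C$ whose closure avoids $-z_1,\ldots,-z_m$. On $D$ fix continuous branches of $\arg(z_j+b)$, and consider the real-analytic map
$$\varphi:D\to\mathbb T^m=(\mathbb R/\mathbb Z)^m,\qquad \varphi(b)=\Big(\frac{\arg(z_1+b)}{2\pi},\ldots,\frac{\arg(z_m+b)}{2\pi}\Big).$$
A map $f:X\to Y$ is realised by $(\underline w,b)$ with respect to $\Phi_{q,\ell}$ exactly when $\ell\varphi(b)$ lies in the open box
$$B_f=\prod_j\Big(\frac{f(\underline x_j)}{q},\frac{f(\underline x_j)+1}{q}\Big)\subset\mathbb T^m.$$
Admissibility is automatic here, since landing in the open box excludes $\mathcal R_{q,\ell}$. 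So it suffices to show the following: the pushforwards $\mu_\ell$ of normalised Lebesgue measure on $D$ under $b\mapsto\ell\varphi(b)$ converge weakly to Haar measure on $\mathbb T^m$ as $\ell\to\infty$. Once this holds, $\mu_\ell(B_f)\to q^{-m}>0$ because $\partial B_f$ has Haar measure zero. Since there are only $q^m$ maps $f$, a single $\ell_0$ works for all of them, and any $b$ in the (nonempty) preimage of $B_f$ can serve as $b_{f,\ell}$.

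By Weyl's criterion, the weak convergence reduces to showing, for each fixed $0\neq\underline k\in\mathbb Z^m$, that
$$\int_D e^{2\pi i\ell\,h_{\underline k}(b)}\,dA(b)\longrightarrow0\quad(\ell\to\infty),\qquad h_{\underline k}(b)=\sum_j k_j\frac{\arg(z_j+b)}{2\pi}.$$
The key observation is that $2\pi h_{\underline k}$ is a branch of $\arg g$, where $g(b)=\prod_j(z_j+b)^{k_j}$. Since the $z_j$ are distinct and $\underline k\neq0$, the rational function $g$ is nonconstant. Hence $\arg g=\operatorname{Im}\log g$ is a nonconstant harmonic function on $D$: if it were constant, the holomorphic function $\log g$ would have constant imaginary part, forcing it to be constant by the open mapping theorem. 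Its gradient equals $|(\log g)'|=|g'/g|$, which vanishes only at the finitely many zeros of $g'$ in $D$.

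The main technical step is the oscillatory-integral decay. Given $\delta>0$, I would remove small discs of total area $<\delta$ around the critical points. On the compact remainder $|\nabla h_{\underline k}|\geq c>0$, and I would cover it by finitely many small boxes on which some partial derivative of $h_{\underline k}$ is bounded away from zero. On each box, integrating by parts in that variable (or a change of variables plus the Riemann--Lebesgue lemma) gives $O(1/\ell)$. This yields $\limsup_\ell|\int_D|\leq\delta$, and since $\delta$ is arbitrary the limit is $0$. This completes the equidistribution, and hence the theorem. The uniformity in $f$ comes for free from the finiteness of $Y^X$, and the only place where the distinctness of the $z_j$ enters is the nonconstancy of $g$.
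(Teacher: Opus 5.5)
Your proposal is correct and follows essentially the same route as the paper: a separating weight vector, the phase map from a disc into the torus, nonresonance via the nonconstancy of $\prod_j(z_j+b)^{k_j}$, Weyl's criterion with a non-stationary phase estimate away from the critical set, and then the open-box argument (boundary of Haar measure zero) together with the finiteness of $Y^X$ to get a uniform $\ell_0$. The differences are only technical. You integrate against normalised Lebesgue measure on $D$ and do local one-variable integrations by parts on boxes, whose boundary terms are harmless since each slice is a bounded number of intervals. You also get a finite critical set from $|\nabla\arg g|=|g'/g|$. The paper instead uses a smooth compactly supported density $\psi$ and a global integration by parts with the vector field $\nabla\phi/|\nabla\phi|^2$, and only needs the critical set to have measure zero.
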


\begin{rem} The construction of complex threshold functions in this paper originates in the works \cite{AIP1971Doklady},\cite{AIPK1971},\cite{AIPK1973} and \cite{AizenbergIvaskiv1977}. Since it was known that these functions are not universal in general when $\ell=1$,  I.~N.~Aizenberg introduced the more general activation function in Definition \ref{angular} (see \cite[formula~(5), p.~350]{Aizenberg2009}), and  on p.~348 of the same paper he asked whether increasing $\ell$ makes arbitrary $q$-valued functions realisable. For $q=2$, that is for Boolean output, universality was already known for $X=\{-1,1\}^n$ (see \cite[pp.~467--473]{Aizenberg1991}), but it was open for general $q$ until our theorem. The latter is actually much stronger, since we can choose the weight vector $\underline w$ uniformly which will work for every function $f$ and every sufficiently large $\ell$.
\end{rem}

The proof of Theorem \ref{thm:strong-periodic-universality} uses an equidistribution theorem for the dilates of certain maps into tori. Such results are classical, see for example \cite{Randol} and its extension to nilmanifolds in \cite[Theorem~1.2]{KraShahSun}. The vector $\underline w$ is chosen so that the conditions of the equidistribution theorem apply. Our strategy is very similar to Yarotsky's neural-network construction in \cite[pp.~11934--11935]{Yarotsky2021}, although the latter uses Kronecker's density theorem instead. 
\begin{cont} In Section~2 we prove Theorem~\ref{thm:counting}, while in 
Section~3 we show Theorem~\ref{thm:strong-periodic-universality} using the methods we described above.
\end{cont}
\begin{ackn} The authors acknowledge that the initial sketches of the paper were developed via conversations with ChatGPT, but the authors thoroughly reviewed and revised the manuscript, and they take full responsibility for the final version of the paper. 
\end{ackn}

\section{The Bovdi--Geche conjecture and multiplicative character sums}

\begin{notn}  Let $\mathbb F$ be a finite field of cardinality $|\mathbb F|=Q$ which is admissible for the parameters in \eqref{eq:para} and let $\varepsilon\in\mathbb F^*$ be a primitive element as above. Write $\mathbb F^*$ as the disjoint union of the sets
\begin{equation}\label{eq:blocks}
 B_j=\{\varepsilon^{jh},\varepsilon^{jh+1},\ldots,
       \varepsilon^{(j+1)h-1}\},
 \qquad 0\leq j\leq q-1.
\end{equation}
Let $\widehat{\mathbb F}^*$ denote the group of complex-valued multiplicative characters of $\mathbb F^*$. Let $\chi_0\in\widehat{\mathbb F}^*$ be the principal character. We extend every character $\chi\in
\widehat{\mathbb F}^*$ to $\mathbb F$ by setting $\chi(0)=0$. For $0\leq j<q$, $z\in\mathbb F$ and $\chi\in\widehat{\mathbb F}^*$, define
\begin{equation}\label{eq:fourier-coefficient}
 c_j(\chi)=\frac{1}{Q-1}\sum_{z\in B_j}\overline{\chi(z)}.
\end{equation}
As usual, for every subset $B\subseteq\mathbb F$ let $ 1_B:\mathbb F
\to\mathbb C$ denote the characteristic function of $B$. 
\end{notn}
\begin{lemma} We have:
\begin{equation}\label{eq:fourier-inversion} 
 1_{B_j}(z)=\sum_{\chi\in\widehat{\mathbb F}^*}c_j(\chi)\chi(z)
 \qquad(\forall z\in\mathbb F).
\end{equation}
\end{lemma}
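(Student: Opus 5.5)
The plan is to substitute the definition \eqref{eq:fourier-coefficient} of $c_j(\chi)$ into the right-hand side of \eqref{eq:fourier-inversion} and interchange the two finite sums. For any fixed $z\in\mathbb F$ this gives
$$\sum_{\chi\in\widehat{\mathbb F}^*}c_j(\chi)\chi(z)=\frac{1}{Q-1}\sum_{u\in B_j}\ \sum_{\chi\in\widehat{\mathbb F}^*}\overline{\chi(u)}\chi(z).$$
Everything then reduces to evaluating the inner sum by the orthogonality relations for the characters of the cyclic group $\mathbb F^*$ of order $Q-1$.

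There are two cases. If $z=0$, then $\chi(0)=0$ for every $\chi$ by our convention, so the right-hand side vanishes. This matches $1_{B_j}(0)=0$, since $B_j\subseteq\mathbb F^*$.

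If $z\in\mathbb F^*$, then for $u\in B_j\subseteq\mathbb F^*$ we have $\overline{\chi(u)}=\chi(u^{-1})$, because character values are roots of unity. Hence the inner sum equals $\sum_\chi\chi(zu^{-1})$. By orthogonality this is $Q-1$ if $zu^{-1}=1$ and $0$ otherwise. The reason is that $\widehat{\mathbb F}^*$ has order $Q-1$, and for $g\neq1$ some $\chi$ satisfies $\chi(g)\neq1$; for instance, take the character sending $\varepsilon$ to $e^{2\pi i/(Q-1)}$. Multiplying the sum by $\chi(g)$ permutes its terms, so the sum must be $0$.

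Consequently the double sum equals $\frac{1}{Q-1}\cdot(Q-1)\cdot\#\{u\in B_j: u=z\}=1_{B_j}(z)$, which is the claim.

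No step is a genuine obstacle. The only point needing care is the separate treatment of $z=0$, which relies on the convention $\chi(0)=0$. That convention applies even to the principal character $\chi_0$, and it is exactly what makes the identity hold on all of $\mathbb F$ rather than only on $\mathbb F^*$.
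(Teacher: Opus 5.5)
Your proof is correct and is essentially the paper's argument: the case $z=0$ follows from the convention $\chi(0)=0$, and for $z\in\mathbb F^*$ you spell out, via the orthogonality relations, the multiplicative Fourier inversion formula that the paper only cites. One small wording fix: in the orthogonality step, multiply the sum $S=\sum_\chi\chi(g)$ by $\rho(g)$ for a fixed character $\rho$ with $\rho(g)\neq 1$, so that $\rho(g)S=S$ forces $S=0$, rather than by ``$\chi(g)$''.
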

\begin{proof} The equality for $z=0$ follows from our convention for extending characters, while for other values it is immediate from the multiplicative Fourier inversion formula.
\end{proof}

\begin{lemma}\label{lem:l1-bound}
There is an absolute constant $C$ such that
$$c_j(\chi_0)=\frac1q,\quad\textrm{and}\quad
 \sum_{\chi\in\widehat{\mathbb F}^*}|c_j(\chi)|
 \leq C(1+\log Q)$$
for every choice of $q$, the field $\mathbb F$, the primitive element
$\varepsilon$ and $j$.
\end{lemma}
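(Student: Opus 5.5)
The plan is to parametrise the characters of $\mathbb F^*$ through the primitive element $\varepsilon$, which turns $c_j(\chi)$ into a geometric sum. Every $\chi\in\widehat{\mathbb F}^*$ is determined by its value on $\varepsilon$. So there is a unique $m\in\{0,1,\ldots,Q-2\}$ with
$$\chi(\varepsilon^a)=e^{2\pi i am/(Q-1)}\qquad\text{for all } a,$$
and $m=0$ corresponds to $\chi_0$. The block $B_j$ consists of the $h=(Q-1)/q$ elements $\varepsilon^a$ with $jh\le a<(j+1)h$. Hence $c_j(\chi_0)=h/(Q-1)=1/q$, which is the first claim.

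For $m\neq0$ we sum the geometric progression:
$$c_j(\chi)=\frac{1}{Q-1}\sum_{a=jh}^{(j+1)h-1}e^{-2\pi i am/(Q-1)},
\qquad\text{so}\qquad
|c_j(\chi)|=\frac{1}{Q-1}\left|\frac{1-e^{-2\pi i hm/(Q-1)}}{1-e^{-2\pi i m/(Q-1)}}\right|.$$
The denominator is nonzero because $1\le m\le Q-2$. Bounding the numerator by $2$ and using $|1-e^{i\theta}|=2|\sin(\theta/2)|$ gives
$$|c_j(\chi)|\le\frac{1}{(Q-1)\,|\sin(\pi m/(Q-1))|}.$$
We also keep the trivial bound $|c_j(\chi)|\le h/(Q-1)=1/q\le 1$, although it is not really needed.

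Finally we sum over $m=1,\ldots,Q-2$. Write $M=Q-1$. By the symmetry $m\mapsto M-m$, it suffices to treat $1\le m\le M/2$ and double the result. On that range we use $\sin x\ge 2x/\pi$ for $0\le x\le\pi/2$, which gives
$$\frac{1}{M\sin(\pi m/M)}\le\frac{1}{M}\cdot\frac{M}{2m}=\frac{1}{2m}.$$
The total is therefore at most $\sum_{m\le M/2}1/m\le 1+\log Q$. Adding the term $c_j(\chi_0)=1/q\le 1/2$ yields the bound with an absolute constant such as $C=2$.

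No step is a genuine obstacle. The only point needing care is that the bound must be uniform in $q$, $j$ and $\varepsilon$. That is why we discard the numerator's dependence on $h$ and $j$, since it only contributes a unimodular phase and a factor of at most $2$. The sine estimate then gives a harmonic sum depending on $Q$ alone.
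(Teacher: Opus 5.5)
Your proposal is correct and follows essentially the same route as the paper. Both arguments parametrise the characters through $\varepsilon$ and evaluate $c_j$ as a geometric sum bounded by $1/(M\sin(\pi m/M))$. They then fold the range by the symmetry $m\mapsto M-m$ and use $\sin u\geq 2u/\pi$ to reduce to a harmonic sum; your explicit constant $C=2$ is a harmless extra.
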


\begin{proof} Clearly $c_j(\chi_0)=\frac{h}{Q-1}=\frac{1}{q}$, so the first claim holds. For simplicity let $M$ denote $Q-1$, and let $\rho$ be the unique generator of $\widehat{\mathbb F}^*$ such that $\rho(\varepsilon)=e^{2\pi i/M}$. For $s\in\{1,2,\ldots,M-1\}$ using the definition of the set $B_j$ we get
$$\begin{aligned}
 |c_j(\rho^s)|
 &=\left|\frac{1}{M}\sum_{a=0}^{h-1}
      \overline{\rho^s(\varepsilon^{jh+a})}\right|\\
 &=\left|\frac{e^{-2\pi i sjh/M}}{M}\sum_{a=0}^{h-1}e^{-2\pi i sa/M}\right|\\
 &=\frac{1}{M}\left|\sum_{a=0}^{h-1}e^{-2\pi i sa/M}\right|\\
 &=\frac{1}{M}\left|\frac{1-e^{-2\pi i sh/M}}{1-e^{-2\pi i s/M}}\right|\\
 &=\frac{|\sin(\pi sh/M)|}{M|\sin(\pi s/M)|}\leq\frac{1}{M\sin(\pi s/M)},
\end{aligned}$$
where we used that the factor $e^{-2\pi i sjh/M}$ has absolute value one in the third equation, the geometric-sum formula in the fourth equation, the identity $|1-e^{it}|=2|\sin(t/2)|$ in the fifth equation, and the inequality
$|\sin(t)|\leq1$ in the last inequality. Therefore
$$\begin{aligned}
 \sum_{\chi\in\widehat{\mathbb F}^*}|c_j(\chi)|
 &=\sum_{s=0}^{M-1}|c_j(\rho^s)|\\
  &\leq\frac1q+\sum_{s=1}^{M-1}\frac{1}{M\sin(\pi s/M)}
  \end{aligned}$$
  $$\begin{aligned}
   &\leq\frac1q+2\sum_{s=1}^{\lfloor M/2\rfloor}
   \frac{1}{M\sin(\pi s/M)}\\
 &\leq\frac1q+2\sum_{s=1}^{\lfloor M/2\rfloor}
   \frac{1}{2s}\leq C(1+\log Q),
 \end{aligned}$$
where we used
$$\sin\left(\frac{\pi(M-s)}{M}\right)
 =\sin\left(\frac{\pi s}{M}\right)$$
in the second inequality, the elementary inequality:
 $$\sin u\geq\frac{2u}{\pi}\qquad(0\leq u\leq\pi/2),$$
which follows from the concavity of sine on this interval, in the third inequality, 
and the usual estimate for the harmonic series in the last inequality. 
\end{proof}
\begin{lemma}\label{lem:affine-character-sum} Let $Z$ be a set of $N$ distinct points of $\mathbb F^n$.  For each $\underline x\in Z$ let
$\chi_{\underline x}\in\widehat{\mathbb F}^*$, and assume that not all
$\chi_{\underline x}$ are principal. Then
\begin{equation}\label{eq:affine-character-bound}
 \left|\sum_{\underline w\in\mathbb F^n}\sum_{T\in\mathbb F}
 \prod_{\underline x\in Z}\chi_{\underline x}(T+\langle \underline x,\underline w\rangle)
 \right| \leq c_N Q^{n+1/2},\end{equation}
where $c_N$ depends only on $N$.
\end{lemma}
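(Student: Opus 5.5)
Fix $\underline w\in\mathbb F^n$ and consider the inner sum over $T$. For each $\underline x\in Z$ put $a_{\underline x}(\underline w)=\langle\underline x,\underline w\rangle\in\mathbb F$. The inner sum is then
$$S(\underline w)=\sum_{T\in\mathbb F}\prod_{\underline x\in Z}\chi_{\underline x}(T+a_{\underline x}(\underline w)),$$
which is a multiplicative character sum of a product of shifted linear factors. The plan is to split the $\underline w$ according to whether the shifts $a_{\underline x}(\underline w)$ collapse in a way that kills the nontriviality of the product, and then apply the Weil bound in the generic case.

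\textbf{Step 1: grouping and Weil.} For a given $\underline w$, group the points $\underline x\in Z$ by the value $a=a_{\underline x}(\underline w)$. For each value $a$ that occurs, set $\psi_a=\prod_{\underline x:\,a_{\underline x}(\underline w)=a}\chi_{\underline x}$. Then
$$S(\underline w)=\sum_T\prod_a\psi_a(T+a),$$
with at most $N$ distinct shifts $a$. Write $\rho$ for a generator of $\widehat{\mathbb F}^*$ and $\psi_a=\rho^{e_a}$. This is $\sum_T\rho(g(T))$ for $g=\prod_a(T+a)^{e_a}$, a polynomial with distinct roots. Whenever some $\psi_a$ is nonprincipal, $g$ is not a $(Q-1)$-th power up to constants. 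The Weil bound then gives $|S(\underline w)|\le (N-1)\sqrt Q$, since there are at most $N$ distinct roots. (The convention $\chi(0)=0$ only affects at most $N$ values of $T$, which is absorbed into the bound.) Summing over all such $\underline w$ contributes at most $(N-1)Q^{n+1/2}$.

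\textbf{Step 2: degenerate $\underline w$.} If every $\psi_a$ is principal, use the trivial bound $|S(\underline w)|\le Q$. Since not all $\chi_{\underline x}$ are principal, pick $\underline x_0$ with $\chi_{\underline x_0}\neq\chi_0$. Degeneracy forces the class of $\underline x_0$ to contain some other $\underline y\neq\underline x_0$; otherwise $\psi_{a_{\underline x_0}}=\chi_{\underline x_0}$ would be nonprincipal. So $\langle\underline x_0-\underline y,\underline w\rangle=0$ for some $\underline y\in Z\setminus\{\underline x_0\}$. Because the points are distinct, each such condition defines a hyperplane of $\mathbb F^n$ with $Q^{n-1}$ elements. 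The degenerate set therefore has size at most $(N-1)Q^{n-1}$, and it contributes at most $(N-1)Q^n\le(N-1)Q^{n+1/2}$.

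Adding the two contributions gives \eqref{eq:affine-character-bound} with $c_N=2(N-1)+1$, say.

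\textbf{Main obstacle.} The delicate point is stating the Weil bound in the right form. One must check the nondegeneracy condition: a product $\prod_a\psi_a(T+a)$ with distinct $a$ and some $\psi_a$ nontrivial is never a constant function times a principal character. The standard statement (e.g.\ Schmidt or Iwaniec--Kowalski: $|\sum_T\chi(g(T))|\le(m-1)\sqrt Q$ when $g$ is not of the form $c\,h^{\mathrm{ord}\chi}$, with $m$ the number of distinct roots) handles this once the characters are written as powers of $\rho$. The hyperplane count in Step 2 is then routine.
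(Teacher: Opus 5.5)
Your proof is correct and follows essentially the same route as the paper: the Weil bound (Lidl--Niederreiter, Thm.~5.41) for the character $\rho$ of order $Q-1$, applied to $\prod_a (T+a)^{e_a}$ for generic $\underline w$, and the trivial bound $Q$ on an exceptional set of $\underline w$ lying in a union of hyperplanes. The only difference is that the paper discards every $\underline w$ with some collision $\langle\underline x,\underline w\rangle=\langle\underline y,\underline w\rangle$ (at most $\binom N2 Q^{n-1}$ of them), whereas your grouping step lets you apply Weil also to colliding $\underline w$ unless all merged characters $\psi_a$ become principal, which gives the slightly smaller exceptional count $(N-1)Q^{n-1}$.
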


\begin{proof} Let $\mathcal A\subseteq\mathbb F^n$ denote the set:
$$\mathcal A=\{\underline w\in\mathbb F^n\mid\exists\underline x,\underline y\in Z\textrm{ such that }\underline x\neq\underline y\textrm{ and }\langle\underline x,\underline w\rangle=\langle \underline y,\underline w\rangle\},$$
and let $\mathcal B\subseteq\mathbb F^n$ be its complement. Clearly $\mathcal A$ is the union of the hyperplanes
$$\{\underline w\in\mathbb F^n\mid\langle\underline x-\underline y,\underline w\rangle=0\},$$
where $\underline x,\underline y\in Z$ are different, so its cardinality is at most $\binom{N}{2}Q^{n-1}$. Therefore
\begin{equation}\label{2.4.2}
\left|\sum_{\underline w\in\mathcal A}\sum_{T\in\mathbb F}
 \prod_{\underline x\in Z}\chi_{\underline x}
 (T+\langle \underline x,\underline w\rangle)
 \right| \leq \binom{N}{2}Q^n,
 \end{equation}
 since each product appearing in the sum has absolute value at most $1$. 

Let $\rho$ generate the character group $\widehat{\mathbb F}^*$ as above, and for every $\underline x\in Z$ let $e(\underline x)\in\{0,1,\ldots,Q-2\}$ be the unique number such that $\chi_{\underline x}=\rho^{e(\underline x)}$. Now let $\underline w\in\mathcal B$ be arbitrary. By assumption at least one of the $e(\underline x)$ is nonzero, so the polynomial
$$P_{\underline w}(t)=\prod_{\substack{\underline x\in Z\\e(\underline x)\neq0}}
   (t+\langle \underline x,\underline w\rangle)^{e(\underline x)}$$
is well-defined and has positive degree. Moreover the elements
$\langle \underline w,\underline x\rangle$ are pairwise distinct as
$\underline x$ ranges over $Z$, so every root of $P_{\underline w}(t)$ has multiplicity at most $Q-2$, and hence this polynomial is not a constant multiple of the $(Q-1)$-th power of a polynomial. As the order of $\rho$ is $Q-1$, we may apply the standard Weil bound for the multiplicative character $\rho$ evaluated on the polynomial $P_{\underline w}(t)$ (see Theorem~5.41 of \cite{LN} on page 225) to get that
$$\left|\sum_{T\in\mathbb F}\rho(P_{\underline w}(T))\right|
 \leq(N-1)Q^{1/2}.$$
 Note that
 $$\rho(P_{\underline w}(T))= \prod_{\underline x\in Z}\chi_{\underline x}(T+\langle \underline x,\underline w\rangle)$$
 for every $T\in\mathbb F$ outside the set
$$E_{\underline w}= \{-\langle\underline x,\underline w\rangle\mid\underline x\in Z\}.$$
There are at most $N$ elements of $E_{\underline w}$, and at each such value the two expressions differ by a quantity of absolute value at most $1$, so we get that
 \begin{equation}\label{2.4.3}
 \left|\sum_{\underline w\in\mathcal B}\sum_{T\in\mathbb F}
 \prod_{\underline x\in Z}\chi_{\underline x}
 (T+\langle \underline x,\underline w\rangle)
 \right| \leq Q^n\big((N-1)Q^{1/2}+N\big).
 \end{equation}
Combining the inequalities (\ref{2.4.2}) and (\ref{2.4.3}) together we get the claim. 
\end{proof}
\begin{notn} For every function $f:X\to Y$ and for every $\underline x\in X$ let $j(f,\underline x)\in\{0,1,\ldots,q-1\}$ be the unique number such that 
$$f(\underline x)=\sigma^{j(f,\underline x)}.$$
\end{notn}
\begin{lemma}\label{lem:counting-identity} We have:
\begin{equation}\label{eq:realization-count}
R_f(\mathbb F,\varepsilon)=\sum_{\underline w\in\mathbb F^n}\sum_{T\in\mathbb F}
 \prod_{\underline x\in X}
 1_{B_{j(f,\underline x)}}(T+\langle \underline x,\underline w\rangle).
\end{equation}
\end{lemma}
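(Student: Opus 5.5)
The identity \eqref{eq:realization-count} follows once we check that, for each fixed pair $(\underline w,T)\in\mathbb F^n\times\mathbb F$, the summand
$$\prod_{\underline x\in X}1_{B_{j(f,\underline x)}}(T+\langle\underline x,\underline w\rangle)$$
equals $1$ if $(\underline w,T)$ is admissible with respect to $X$ and realises $f$, and equals $0$ otherwise. Summing over all pairs then gives exactly $R_f(\mathbb F,\varepsilon)$ by Definition~\ref{def1.4}. Since each factor takes only the values $0$ and $1$, the product equals $1$ precisely when every factor equals $1$. So the whole task is to show that, for every $\underline x\in X$, the factor at $\underline x$ is $1$ if and only if $\theta(\underline x)$ is defined and equals $f(\underline x)$.

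First I identify the level sets of $\Phi_\varepsilon$ with the blocks $B_j$ of \eqref{eq:blocks}. Every $z\in\mathbb F^*$ is uniquely of the form $z=\varepsilon^a$ with $0\leq a\leq Q-2$. By \eqref{eq:activation}, $\Phi_\varepsilon(z)=\sigma^{\lfloor a/h\rfloor}$, and $\lfloor a/h\rfloor=j$ exactly when $jh\leq a\leq (j+1)h-1$, that is, when $z\in B_j$. The element $\sigma=\varepsilon^h$ has order exactly $q$, so the powers $\sigma^0,\ldots,\sigma^{q-1}$ are pairwise distinct. Hence for $z\in\mathbb F^*$ and $0\leq j<q$ we have $\Phi_\varepsilon(z)=\sigma^j$ if and only if $z\in B_j$. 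This correspondence is the only point needing care, and it amounts to the distinctness of the powers of $\sigma$.

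Now fix $\underline x\in X$ and put $z=T+\langle\underline x,\underline w\rangle$. If $z=0$, the threshold function is undefined at $\underline x$, so $(\underline w,T)$ is not admissible. In this case the factor is $1_{B_{j(f,\underline x)}}(0)=0$, because all $B_j$ are contained in $\mathbb F^*$. If $z\neq0$, then $\theta(\underline x)=\Phi_\varepsilon(z)$ is defined. By the previous paragraph it equals $f(\underline x)=\sigma^{j(f,\underline x)}$ if and only if $z\in B_{j(f,\underline x)}$, that is, if and only if the factor is $1$. Taking the product over all $\underline x\in X$ and then summing over $(\underline w,T)$ yields \eqref{eq:realization-count}.
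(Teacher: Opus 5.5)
Your proof is correct and follows the same route as the paper: a pointwise check that each product equals $1$ exactly when $(\underline w,T)$ is admissible and realises $f$, followed by summation over all pairs. The only difference is that you spell out the identification $\Phi_\varepsilon^{-1}(\sigma^j)=B_j$, using that $\sigma$ has order exactly $q$, together with the case $T+\langle\underline x,\underline w\rangle=0$; the paper uses both facts without comment.
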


\begin{proof} For every $(\underline w,T)\in\mathbb F^{n+1}$ and for every $\underline x\in X$ we have
$$1_{B_{j(f,\underline x)}}(T+\langle \underline x,\underline w\rangle)=1$$
if and only if
$$T+\langle \underline x,\underline w\rangle\neq0\quad\textrm{and}
\quad\Phi_{\varepsilon}(T+\langle \underline x,\underline w\rangle)=\sigma^{j(f,\underline x)}=f(\underline x).$$
Therefore
$$\prod_{\underline x\in X}1_{B_{j(f,\underline x)}}
(T+\langle \underline x,\underline w\rangle)$$
is equal to $1$ if and only if $(\underline w,T)$ is admissible with respect to $X$ and realises $f$, and it is zero otherwise.  Summing this product
over all pairs $(\underline w,T)$ gives \eqref{eq:realization-count}.
\end{proof}

\begin{proof}[Proof of Theorem \ref{thm:counting}] By plugging equation (\ref{eq:fourier-inversion}) into the identity in Lemma \ref{lem:counting-identity} we get
\begin{equation}\label{eq:character-expanded-count}
 \begin{aligned}
 R_f(\mathbb F,\varepsilon)
 &=\sum_{\underline w\in\mathbb F^n}\sum_{T\in\mathbb F}
   \prod_{\underline x\in X}
   \left(\sum_{\chi_{\underline x}\in\widehat{\mathbb F}^*}
   c_{j(f,\underline x)}(\chi_{\underline x})
   \chi_{\underline x}(T+\langle\underline x,\underline w\rangle)\right)\\
 &=\sum_{\underline w\in\mathbb F^n}\sum_{T\in\mathbb F}
   \sum_{(\chi_{\underline x})_{\underline x\in X}\in
   (\widehat{\mathbb F}^*)^X}
   \left(\prod_{\underline x\in X}c_{j(f,\underline x)}(\chi_{\underline x})\right)\\
 &\hspace{23mm}\cdot
   \prod_{\underline x\in X}
   \chi_{\underline x}(T+\langle\underline x,\underline w\rangle)\\
 &=\sum_{(\chi_{\underline x})_{\underline x\in X}\in
   (\widehat{\mathbb F}^*)^X}
   \left(\prod_{\underline x\in X}c_{j(f,\underline x)}(\chi_{\underline x})\right)\\
 &\hspace{23mm}\cdot
   \sum_{\underline w\in\mathbb F^n}\sum_{T\in\mathbb F}
   \prod_{\underline x\in X}
   \chi_{\underline x}(T+\langle\underline x,\underline w\rangle),
 \end{aligned}
\end{equation}
where we have expanded the finite product of finite sums in the second equality, and we have exchanged the finite sums over $(\underline w,T)$ and over the tuples $(\chi_{\underline x})_{\underline x\in X}\in
(\widehat{\mathbb F}^*)^X$ in the third equality. 

Consider first the tuple $(\chi_{\underline x})_{\underline x\in X}\in
(\widehat{\mathbb F}^*)^X$ in which every $\chi_{\underline x}$ is the principal character $\chi_0$.  By Lemma \ref{lem:l1-bound} we have
$c_{j(f,\underline x)}(\chi_0)=1/q$ for every $\underline x$, therefore the corresponding contribution in the right hand side of (\ref{eq:character-expanded-count}) is independent of $f$ and equals
$$\frac{1}{q^N}\#\big\{(\underline w,T)\mid
 T+\langle \underline x,\underline w\rangle\neq0
 \text{ for every }\underline x\in X\big\}.$$
Each equation $T+\langle \underline w,\underline x\rangle=0$ defines a
hyperplane of cardinality $Q^n$ in $\mathbb F^{n+1}$, so 
$$|\big(\prod_{\underline x\in X}c_{j(f,\underline x)}(\chi_0)\big)
 \cdot\sum_{\underline w\in\mathbb F^n}\sum_{T\in\mathbb F}
   \prod_{\underline x\in X}
   \chi_0(T+\langle\underline x,\underline w\rangle)-\frac{Q^{n+1}}{q^N}|
   \leq C'_NQ^n$$
for some constant $C_N'$ depending only on $N$.  All remaining character tuples satisfy the conditions of Lemma \ref{lem:affine-character-sum}, so the latter combined with Lemma \ref{lem:l1-bound} bounds the absolute value of their total contribution to the right hand side of (\ref{eq:character-expanded-count}) by
$$c_N Q^{n+1/2}
 \prod_{\underline x\in X}
 \left(\sum_{\chi\in\widehat{\mathbb F}^*}
 |c_{j(f,\underline x)}(\chi)|\right)
 \leq C''_NQ^{n+1/2}(1+\log Q)^N$$
for some constant $C_N''$ depending only on $N$. Now \eqref{eq:counting-asymptotic} is clear. 
\end{proof}

\section{The universality of periodic complex neurons and equidistribution of maps}
\label{sec:periodic-universality}

We will start with the following easy
\begin{lemma}\label{lem:separating-complex-weight} Let $Z\subset\mathbb C^n$ be finite. Then there is a vector $\underline w\in\mathbb C^n$ such that the numbers $\langle\underline w,\underline x\rangle$ are pairwise distinct as
$\underline x$ ranges over $Z$. 
\end{lemma}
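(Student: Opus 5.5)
The plan is to treat this as a finite avoidance problem for hyperplanes, exactly as in the complex-field analogue of the set $\mathcal A$ from the proof of Lemma~\ref{lem:affine-character-sum}. If $|Z|\leq1$ there is nothing to prove, so assume $Z$ has at least two elements. For each unordered pair of distinct points $\underline x,\underline y\in Z$, consider the set
$$H_{\underline x,\underline y}=\{\underline w\in\mathbb C^n\mid\langle\underline x-\underline y,\underline w\rangle=0\}.$$
Since $\underline x-\underline y\neq0$, the map $\underline w\mapsto\langle\underline x-\underline y,\underline w\rangle$ is a nonzero linear functional on $\mathbb C^n$. Hence $H_{\underline x,\underline y}$ is a proper linear subspace (a hyperplane) of $\mathbb C^n$. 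A vector $\underline w$ has the required property precisely when it lies outside the finite union $\bigcup H_{\underline x,\underline y}$.

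The key step is to show that $\mathbb C^n$ is not a finite union of proper subspaces. I would do this with the polynomial argument. Let
$$P(\underline w)=\prod_{\{\underline x,\underline y\}}\langle\underline x-\underline y,\underline w\rangle,$$
the product running over all unordered pairs of distinct elements of $Z$. Each factor is a nonzero linear form, and the polynomial ring $\mathbb C[w_1,\ldots,w_n]$ is an integral domain. Therefore $P$ is a nonzero polynomial.

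A nonzero polynomial over an infinite field does not vanish identically on $\mathbb C^n$. This follows by induction on $n$ from the fact that a nonzero one-variable polynomial has finitely many roots. So there is a $\underline w$ with $P(\underline w)\neq0$, and this $\underline w$ separates the points of $Z$.

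Alternatively, I would give the explicit choice $\underline w=(1,t,t^2,\ldots,t^{n-1})$. Then $\langle\underline x-\underline y,\underline w\rangle$ is a nonzero polynomial in $t$ of degree at most $n-1$, with coefficients the coordinates of $\underline x-\underline y$. Only finitely many $t\in\mathbb C$ are roots of any of these finitely many polynomials, so any other $t$ works.

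There is no real obstacle here. The only point needing care is the nonvanishing of each linear form, which is immediate from $\underline x\neq\underline y$. Later in the section it will presumably be useful to choose $\underline w$ generically, for instance avoiding additional algebraic conditions needed for the equidistribution theorem. The same argument shows that the good set is the complement of a proper Zariski-closed set, hence open, dense and of full measure, and I would record this as a remark.
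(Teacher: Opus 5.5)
Your proof is correct and follows the same route as the paper: the bad set is the finite union of the proper hyperplanes $\langle\underline x-\underline y,\underline w\rangle=0$, which cannot cover $\mathbb C^n$. The paper simply asserts this covering fact, whereas you justify it, either via the nonvanishing of the product polynomial $P$ or via the explicit choice $\underline w=(1,t,\ldots,t^{n-1})$; this is a harmless and welcome addition.
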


\begin{proof} For every $\underline x,\underline y\in Z$ such that $\underline x\neq\underline y$ the equation $\langle\underline x,\underline w\rangle=\langle \underline y,\underline w\rangle$ defines a proper complex hyperplane in
$\mathbb C^n$.  A finite union of proper hyperplanes cannot cover
$\mathbb C^n$, so there is a $\underline w$ outside their union with the required property.
\end{proof}

\begin{notn} Now we are going to use the notation of Definition \ref{angular} and Theorem \ref{thm:strong-periodic-universality}, and fix a vector
$\underline w$ as in Lemma \ref{lem:separating-complex-weight} above for $Z=X\subset\mathbb C^n$. Write $a_{\underline x}=
\langle\underline x,\underline w\rangle$ for every $\underline x\in X$ and
choose an open disk $U\subset\mathbb C$ whose closure does not intersect the finite set $\{-a_{\underline x}\mid\underline x\in X\}$. On $U$ for each $\underline x\in X$ choose a real-analytic branch $\theta_{\underline x}$ of $b\mapsto\frac{1}{2\pi}\arg(b+a_{\underline x})$, and set
\begin{equation}\label{eq:phase-map}
 \Theta:U\longrightarrow\mathbb R^X,
 \qquad
 \Theta(b)=(\theta_{\underline x}(b))_{\underline x\in X}.
\end{equation}
For every $\underline h=(h_{\underline x})_{\underline x\in X}\in\mathbb Z^X$ let $\phi_{\underline h}:U\to\mathbb R$ denote the real-analytic
function given by the rule:
$$b\longmapsto\langle\underline h,\Theta(b)\rangle
=\sum_{\underline x\in X}h_{\underline x}\theta_{\underline x}(b).$$
\end{notn}

\begin{lemma}\label{lem:phase-nonresonance} For every nonzero
$\underline h\in\mathbb Z^X$ the function $\phi_{\underline h}$ is not constant on $U$.
\end{lemma}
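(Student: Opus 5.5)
We argue by contradiction. Suppose $\phi_{\underline h}$ is constant on $U$ for some nonzero $\underline h\in\mathbb Z^X$.

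The key observation is that each $\theta_{\underline x}$ is the imaginary part of a holomorphic function. On the disk $U$, which avoids $-a_{\underline x}$, choose a holomorphic branch $L_{\underline x}(b)$ of $\log(b+a_{\underline x})$ compatible with the chosen argument branch. Then $2\pi\theta_{\underline x}(b)=\operatorname{Im}L_{\underline x}(b)$, and so
$$2\pi\phi_{\underline h}=\operatorname{Im}F,\qquad F(b)=\sum_{\underline x\in X}h_{\underline x}L_{\underline x}(b).$$
The function $F$ is holomorphic on $U$, and its imaginary part is constant there. By the open mapping theorem, or equivalently by the Cauchy--Riemann equations, $F$ itself is constant on the connected set $U$. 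Hence $F'\equiv0$ on $U$, that is,
$$\sum_{\underline x\in X}\frac{h_{\underline x}}{b+a_{\underline x}}=0\qquad\text{for all }b\in U.$$

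The left-hand side is a rational function of $b$. Since it vanishes on a nonempty open set, it vanishes identically as a rational function. Now use that $\underline w$ was chosen by Lemma \ref{lem:separating-complex-weight}, so the $a_{\underline x}$ are pairwise distinct. The poles $-a_{\underline x}$ are therefore distinct, and partial fractions with distinct simple poles are linearly independent. Concretely, multiply by $b+a_{\underline y}$ and let $b\to-a_{\underline y}$ along the rational identity; this gives $h_{\underline y}=0$. Doing this for every $\underline y\in X$ yields $\underline h=0$, a contradiction.

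The only point needing care is that real-analytic constancy on $U$ yields an identity valid beyond $U$. This follows from the identity theorem for rational functions, or simply by clearing denominators: we obtain a polynomial identity on an open set, hence a polynomial identity everywhere. No step presents a serious obstacle; the distinctness of the $a_{\underline x}$ is the essential input.
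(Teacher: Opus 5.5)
Your proof is correct and is essentially the paper's argument. The paper works with the product $R_{\underline h}(b)=\prod_{\underline x\in X}(b+a_{\underline x})^{h_{\underline x}}$ and uses the open mapping theorem on its constant argument, whereas you work with its logarithm $F=\sum_{\underline x}h_{\underline x}L_{\underline x}$ and its constant imaginary part; from there both proofs conclude identically, via the vanishing of the logarithmic derivative $\sum_{\underline x}h_{\underline x}/(b+a_{\underline x})$, unique continuation of this rational function, and reading off $h_{\underline x}$ as the residue at each of the distinct poles $-a_{\underline x}$.
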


\begin{proof} First note that the function
$$R_{\underline h}(b)=
\prod_{\underline x\in X}(b+a_{\underline x})^{h_{\underline x}}$$
is well-defined and nonzero everywhere on $U$, since $-a_{\underline x}\notin U$ for every $\underline x\in X$. Now assume that
$\phi_{\underline h}$ is constant. Then $R_{\underline h}$ has constant argument on $U$, so by the open mapping theorem $R_{\underline h}$ is constant. Therefore its logarithmic derivative:
\begin{equation}\label{eq:logarithmic-derivative}
 \frac{R_{\underline h}'(b)}{R_{\underline h}(b)}
 =\sum_{\underline x\in X}\frac{h_{\underline x}}{b+a_{\underline x}}
\end{equation}
vanishes on $U$. This is a rational function, so it vanishes identically by the unique continuation principle. Since the $a_{\underline x}$ are distinct, the residue of this logarithmic derivative at $-a_{\underline x}$ is $h_{\underline x}$.  Hence $h_{\underline x}=0$ for every $\underline x\in X$ which is a contradiction.
\end{proof}

\begin{defn} Let $\mathbb T^X$ denote the quotient torus $\mathbb R^X/\mathbb Z^X$, and let $\mathrm{mod}:\mathbb R^X\to\mathbb T^X$ denote the quotient map. Let $\mu$ denote the usual planar measure (or area) on subsets of $\mathbb C$, and let $\psi:U\to\mathbb R$ be a smooth, compactly supported and nonnegative function such that
$\int_U\psi\,d\mu=1$.  For every positive integer $\ell$ let $\mu_\ell$ denote the measure on $\mathbb T^X$ which for every continuous $F:\mathbb T^X\to\mathbb C$ is given by the formula:
$$\int_{\mathbb T^X}F(\underline y)\,d\mu_\ell(\underline y)
 =\int_U F\bigl(\mathrm{mod}(\ell\Theta(b))\bigr)\psi(b)\,d\mu(b).$$
\end{defn}

\begin{lemma}\label{lem:phase-equidistribution} The measures $\mu_\ell$ converge weakly to the Haar probability measure on $\mathbb T^X$ as
$\ell\to\infty$.
\end{lemma}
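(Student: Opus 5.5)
By Weyl's criterion, it suffices to test the measures $\mu_\ell$ against the characters $\underline y\mapsto e^{2\pi i\langle\underline h,\underline y\rangle}$ of $\mathbb T^X$, for $\underline h\in\mathbb Z^X$. Trigonometric polynomials are uniformly dense in the continuous functions on $\mathbb T^X$, and each $\mu_\ell$ is a probability measure, because $\psi\geq0$ and $\int_U\psi\,d\mu=1$. So weak convergence follows once we show
$$I_\ell(\underline h)=\int_U e^{2\pi i\ell\phi_{\underline h}(b)}\psi(b)\,d\mu(b)\longrightarrow0\qquad(\ell\to\infty)$$
for every nonzero $\underline h$. For $\underline h=0$ the integral equals $1$, which matches the Haar measure. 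This reduces the lemma to an oscillatory-integral estimate with real-analytic phase $\phi_{\underline h}$ and smooth compactly supported amplitude $\psi$.

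The key input is Lemma \ref{lem:phase-nonresonance}: for $\underline h\neq0$ the phase $\phi_{\underline h}$ is non-constant on the connected set $U$. Since $\phi_{\underline h}$ is real-analytic, its gradient $\nabla\phi_{\underline h}$ is real-analytic and not identically zero. Hence the critical set $S=\{b\in U:\nabla\phi_{\underline h}(b)=0\}$ is a proper real-analytic subset of $U$, and has planar measure zero.

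Fix $\delta>0$ and choose a closed set $K\subset U\setminus S$ with $\int_{U\setminus K}\psi\,d\mu<\delta$. This is possible because $S\cap\operatorname{supp}\psi$ is compact of measure zero, so it has a neighbourhood of small measure. Take a smooth cutoff $\eta$ with $0\leq\eta\leq1$, $\eta=1$ on $K$, compactly supported in $U\setminus S$, and split $\psi=\eta\psi+(1-\eta)\psi$.
\begin{itemize}
\item The contribution of $(1-\eta)\psi$ to $|I_\ell(\underline h)|$ is at most $\delta$, uniformly in $\ell$.
\item On $\operatorname{supp}(\eta\psi)$ we have $|\nabla\phi_{\underline h}|\geq c>0$. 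Write $e^{2\pi i\ell\phi}=\frac{\nabla\phi\cdot\nabla}{2\pi i\ell|\nabla\phi|^2}e^{2\pi i\ell\phi}$ and integrate by parts once. There are no boundary terms, by compact support, so this contribution is $O_{\underline h,\delta}(1/\ell)$.
\end{itemize}
Hence $\limsup_{\ell\to\infty}|I_\ell(\underline h)|\leq\delta$ for every $\delta>0$, which gives $I_\ell(\underline h)\to0$ and completes the proof.

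The main obstacle is the handling of critical points of the phase. Everything hinges on showing that the critical set of $\phi_{\underline h}$ is negligible, and real-analyticity together with non-constancy is exactly what delivers this. Alternatively, one can note that $\phi_{\underline h}$ is harmonic, being $\frac1{2\pi}\operatorname{Im}\log R_{\underline h}$. Its gradient is then the conjugate of $\frac{1}{2\pi}\,\frac{R_{\underline h}'}{R_{\underline h}}$, a nonzero rational function by the residue argument in the proof of Lemma \ref{lem:phase-nonresonance}. So $S$ is in fact finite, which makes the measure-zero step immediate.
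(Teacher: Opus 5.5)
Your proposal is correct and follows the paper's proof essentially step for step: the Fourier (Weyl) criterion, non-constancy of $\phi_{\underline h}$ from Lemma~\ref{lem:phase-nonresonance}, a measure-zero critical set by real-analyticity, a cutoff separating off a small-mass neighbourhood of that set, and a single integration by parts with the vector field $\nabla\phi/|\nabla\phi|^2$ giving $O(1/\ell)$. Your closing harmonicity remark is a valid shortcut the paper does not use; strictly, $\nabla\phi_{\underline h}$ is $i$ times the conjugate of $\frac{1}{2\pi}R_{\underline h}'/R_{\underline h}$, but only the modulus matters, so the critical set is indeed finite.
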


\begin{proof} For every $\underline h\in\mathbb Z^X$ let
$\widehat\mu_\ell(\underline h)$ denote the integral:
$$\int_{\mathbb T^X}e^{2\pi\mathbf i\langle\underline h,\underline y\rangle}\,d\mu_\ell(\underline y)
 =\int_U e^{2\pi\mathbf i\ell\langle\underline h,\Theta(b)\rangle}\psi(b)\,d\mu(b).$$
When $\underline h=0$ we get that
$$\widehat\mu_\ell(\underline 0)=\int_U\psi\,d\mu=1$$
by our assumptions on $\psi$. Hence it will be sufficient to show that
$\widehat\mu_\ell(\underline h)\to0$ as $\ell\to\infty$ for every non-zero
$\underline h$ by the Fourier criterion for weak convergence of measures on a torus. 

By Lemma~\ref{lem:phase-nonresonance} the real-analytic function $\phi_{\underline h}$ is not constant on the disk $U$.  Its critical set
$$C_{\underline h}=\{b\in U\mid \nabla\phi_{\underline h}(b)=0\}$$
has planar measure zero.  Indeed, if both first partial derivatives of
$\phi_{\underline h}$ vanished identically, then $\phi_{\underline h}$ would be constant because $U$ is connected.  Hence at least one of these partial derivatives is a nonzero real-analytic function. The zero set of a nonzero real-analytic function has planar measure zero, and $C_{\underline h}$ is contained in that zero set.

Let $K=\operatorname{supp}(\psi)$, which is a compact subset of $U$, and let $\varepsilon>0$.  Since $C_{\underline h}$ has measure zero, there is an open set $V$ whose closure lies in $U$ and which contains $C_{\underline h}\cap K$ such that
$$\int_V\psi(b)\,d\mu(b)<\varepsilon.$$
Choose a smooth, compactly supported function $\eta:U\to\mathbb R$ with $0\leq\eta\leq1$ such that $\eta$ vanishes on a neighborhood of $C_{\underline h}\cap K$ and $\eta=1$ on $K\setminus V$. Then
$$\left|\int_U e^{2\pi i\ell\phi_{\underline h}(b)}
 (1-\eta(b))\psi(b)\,d\mu(b)\right|
 \leq \int_V\psi(b)\,d\mu(b)<\varepsilon.$$
Set $a=\eta\psi$, and for this calculation write
$\phi=\phi_{\underline h}$ and $\lambda=2\pi\ell$.
Since the compact set $\operatorname{supp}(a)$ is disjoint from
$C_{\underline h}$, the function $|\nabla\phi|$ is bounded below by a
positive constant on a neighborhood of this set. On that neighborhood define
$$W=W_{\underline h}=\frac{\nabla\phi}{|\nabla\phi|^2}.$$
Here gradients and divergences are taken with respect to the two real
coordinates of complex number and we will let $\boldsymbol\cdot$ denote the corresponding real scalar product. The vector field $aW$ extends by zero
to a smooth, compactly supported vector field on $U$.

Choose an open disk $D$ containing $\operatorname{supp}(a)$ such that
$\overline D\subset U$, and let $\mathbf n$ denote the outward unit
normal on $\partial D$. Green's theorem in divergence form, applied
separately to the real and imaginary parts, gives
\begin{align*}
 \int_D\operatorname{div}\bigl(e^{i\lambda\phi}aW\bigr)\,d\mu
 &=\int_{\partial D}e^{i\lambda\phi}aW
   \boldsymbol\cdot\mathbf n\,ds=0,
\end{align*}
because $aW$ vanishes near $\partial D$. The product rule and the identity
$W\boldsymbol\cdot\nabla\phi=1$ on a neighborhood of
$\operatorname{supp}(a)$ imply that
\begin{align*}
 \operatorname{div}\bigl(e^{i\lambda\phi}aW\bigr)
 &=\nabla\bigl(e^{i\lambda\phi}\bigr)\boldsymbol\cdot(aW)
   +e^{i\lambda\phi}\operatorname{div}(aW)\\
 &=i\lambda e^{i\lambda\phi}a
   +e^{i\lambda\phi}\operatorname{div}(aW).
\end{align*}
Integrating this identity and rearranging therefore gives
\[
 \int_U e^{i\lambda\phi}a\,d\mu
 =-\frac{1}{i\lambda}
   \int_U e^{i\lambda\phi}\operatorname{div}(aW)\,d\mu.
\]
Here the integrals over $D$ may be replaced by integrals over $U$ because
the integrands are supported in $\operatorname{supp}(a)\subset D$.
Taking absolute values, applying the triangle inequality for integrals,
and using $|e^{i\lambda\phi}|=1$, we obtain
\begin{align*}
 \left|\int_U e^{2\pi i\ell\phi_{\underline h}}
   \eta\psi\,d\mu\right|
 &=\frac{1}{2\pi\ell}
   \left|\int_U e^{i\lambda\phi}\operatorname{div}(aW)\,d\mu\right|\\
 &\leq\frac{1}{2\pi\ell}
   \int_U\left|\operatorname{div}(aW)\right|\,d\mu\\
 &=\frac{C_{\underline h,V,\eta,\psi}}{\ell},
\end{align*}
where explicitly
\[
 C_{\underline h,V,\eta,\psi}
 =\frac{1}{2\pi}\int_U
   \left|\operatorname{div}(\eta\psi W_{\underline h})\right|\,d\mu.
\]
This constant is finite because $aW$ is smooth and compactly supported,
and it is independent of $\ell$ once $\underline h,V,\eta,\psi$ are fixed.
Combining these two estimates we get that
$$\limsup_{\ell\to\infty}|\widehat\mu_\ell(\underline h)|\leq\varepsilon.$$
Since $\varepsilon>0$ was arbitrary we get that $\widehat\mu_\ell(\underline h)\to0$. 
\end{proof}

\begin{proof}[Proof of Theorem~\ref{thm:strong-periodic-universality}] We continue to use the notation introduced in this section so far. By slight abuse of notation for every congruence class $a\mod q$ let the same symbol $a$ denote its unique representative in $\{0,1,\ldots,q-1\}$. For every function $f:X\to Y$ let $\mathcal B_f\subset\mathbb T^X$ denote the image of the open box
$$\prod_{\underline x\in X}\left(\frac{f(\underline x)}{q},
\frac{f(\underline x)+1}{q}\right)
 \subset\mathbb R^X$$
under the quotient map $\mathrm{mod}:\mathbb R^X\to\mathbb T^X$.  The set $\mathcal B_f$ has Haar measure $q^{-|X|}$ and its boundary has Haar measure zero, so by Lemma~\ref{lem:phase-equidistribution} we get that
$$\mu_\ell(\mathcal B_f)\longrightarrow q^{-|X|}.$$
In particular $\mu_\ell(\mathcal B_f)>0$ for every sufficiently large
$\ell$. Since there are only finitely many functions from $X$ to $Y$, we get that there is an $\ell_0$ such that
$$\mu_\ell(\mathcal B_f)>0\qquad(\forall\ell\geq\ell_0)$$
for every $f$ as above.

Now let $\ell\geq\ell_0$ and $f$ be arbitrary. By the above there is a $b\in U$ such that $\mathrm{mod}(\ell\Theta(b))\in\mathcal B_f$.  Since $b\ne-a_{\underline x}$ we get that $b+a_{\underline x}$ is nonzero for every
$\underline x\in X$.  Moreover $b+a_{\underline x}\notin\mathcal R_{q,\ell}$ and 
$$\Phi_{q,\ell}(b+a_{\underline x})=f(\underline x)\qquad
(\forall\underline x\in X).$$
Since $a_{\underline x}=\langle\underline w,\underline x\rangle$ we get that
$(\underline w,b)$ realises $f$. 
\end{proof}

\end{document}